\newtheorem{theorem}{Theorem}[section]
\newtheorem{proposition}[theorem]{Proposition}
\newtheorem{lemma}[theorem]{Lemma}
\newtheorem{corollary}[theorem]{Corollary}
\theoremstyle{definition}
\newtheorem{definition}[theorem]{Definition}
\theoremstyle{remark}
\newtheorem{remark}[theorem]{Remark}
\newcommand{\E}{\mathbb{E}}
\DeclareMathOperator*{\argmin}{arg\,min}
\DeclareMathOperator{\sg}{sg}
\newcommand{\loss}{\mathcal{L}}
\newcommand{\piref}{\pi_{\mathrm{ref}}}
\newcommand{\pik}{\pi_k}
\title{\textbf{Orthogonalized Policy Optimization}\\
\large Policy Optimization as Orthogonal Projection in Hilbert Space}
\author{Wang Zixian\\
\small China Mobile Communications Group Shandong Co., Ltd. Tai'an Branch\\
\small \texttt{wangzixian@sd.chinamobile.com}}
\date{}
\begin{document}
\maketitle

\begin{abstract}
We propose \textbf{Orthogonalized Policy Optimization (OPO)}, a principled framework for large language model alignment derived from optimization in the Hilbert function space $L^2(\pi_k)$. Lifting policy updates from the probability simplex into $L^2(\pi_k)$ transforms the nonlinear normalization constraint into a linear orthogonality condition $\langle v, \mathbf{1} \rangle_{\pi_k} = 0$ on the density fluctuation field $v = \pi/\pi_k - 1$. By the Hilbert Projection Theorem, the unique closed-form update is 
$v^* = (\omega_\alpha - \mathbb{E}[\omega_\alpha])/\mu$, 
where the subtracted mean acts as a chemical potential enforcing probability conservation. This interpretation reveals advantage $z$-score normalization as a conservation-law projection rather than a variance-reduction heuristic.

OPO cleanly decouples \emph{sampling geometry}, controlled by the escort exponent $\alpha$, from \emph{optimization geometry}, governed by the stiffness parameter $\mu$, a separation not attainable under KL-based objectives. The same update emerges independently as a Euclidean mirror-descent step and as the linear-response law of near-equilibrium statistical mechanics, establishing its structural uniqueness within ratio geometry.

Structurally, OPO induces constant curvature, non-saturating linear gradient dynamics, and an intrinsic $\chi^2$ trust region. Experiments on MATH benchmarks show that the Hilbert projection formulation prevents the gradient saturation typical of KL-constrained methods. By sustaining non-vanishing gradients in high-confidence regimes, OPO avoids premature plateaus and achieves stronger long-horizon training rewards and improved out-of-distribution generalization compared to clipping-based baselines.
\end{abstract}

Aligning Large Language Models (LLMs) with human preferences has become a cornerstone of modern AI development. The standard paradigm, Reinforcement Learning from Human Feedback (RLHF)~\cite{Christiano2017RLHF,Ouyang2022InstructGPT}, typically employs algorithms like Proximal Policy Optimization (PPO)~\cite{Schulman2017PPO} or Direct Preference Optimization (DPO)~\cite{Rafailov2023DPO} to optimize a policy against a reward model or preference dataset. Despite their formulation differences, these methods share a common structural foundation: they all rely on Kullback--Leibler (KL) divergence to constrain the policy update, preventing it from deviating excessively from a reference distribution.

While empirically effective, KL-regularized objectives exhibit distinct limitations in reasoning-intensive domains where high confidence is required. As the policy improves and assigns high probability to correct reasoning chains, the KL penalty---which induces an exponential geometry in the log-probability space---often dominates the learning signal. This manifests as gradient saturation, where the driving force for further improvement vanishes exponentially as the model becomes confident. Consequently, training often plateaus prematurely, behavior commonly attributed to ``over-regularization'' but which we argue is intrinsic to the chosen geometry.

We contend that this issue stems from a structural conflation in existing objective designs. Fundamentally, alignment involves two independent design choices: \textit{sampling geometry}, which determines the effective weighting of training examples (e.g., whether to focus on high-advantage samples), and \textit{optimization geometry}, which determines the curvature of the update step (e.g., how to measure distance in the policy space). In standard KL-based methods, a single divergence term dictates both, coupling the exploration strength with the optimization stability. Adjusting one inevitably perturbs the other, creating a dilemma where aggressive sampling destabilizes training, while stable optimization stifles exploration.

\paragraph{A Hilbert Space Perspective.}
To resolve this, we propose reformulating alignment as a constrained optimization problem in the Hilbert function space $L^2(\pik)$. This shift yields three structural advantages:
\begin{enumerate}[leftmargin=2em]
    \item The probability conservation constraint $\sum_y \pi(y) = 1$ becomes a \emph{linear} orthogonality condition $\langle v, \mathbf{1}\rangle_{\pik} = 0$, defining a closed subspace $\mathcal{H}_0 \subset L^2(\pik)$.
    \item The optimal policy update is obtained directly by the \emph{Hilbert Projection Theorem}: the closest point in $\mathcal{H}_0$ to the advantage-driven target. The Lagrange multiplier (chemical potential) enforcing conservation emerges organically as the projected-out component.
    \item The Hessian of the resulting objective is the constant scalar $\mu I$, completely independent of the data distribution or the current policy state.
\end{enumerate}

We further show that this Hilbert-space derivation is equivalent to two other well-known mathematical structures---Bregman mirror descent with a Euclidean mirror map, and near-equilibrium linear response from statistical mechanics---confirming that \textbf{Orthogonalized Policy Optimization (OPO)} is the unique quadratic proximal response in ratio geometry.

Our contributions are as follows:
\begin{itemize}
    \item We identify the implicit coupling of sampling and optimization geometries in KL-based alignment methods as a root cause of gradient saturation.
    \item We formulate alignment as a constrained optimization in $L^2(\pik)$ and derive OPO via the Hilbert Projection Theorem, revealing that advantage $z$-score normalization is the unique conservation-law projection.
    \item We prove equivalence to mirror descent and linear response, confirming uniqueness of the quadratic proximal response in ratio geometry.
    \item We empirically demonstrate that OPO outperforms strong baselines (GRPO, GSPO, DAPO) on mathematical reasoning tasks across two experimental settings: long-horizon training dynamics (224 steps) and out-of-distribution generalization under data scarcity. A \emph{gradient response efficiency} analysis reveals that OPO maintains ${\sim}5{\times}$ higher gradient-per-signal than KL-based baselines.
\end{itemize}

\section{Related Work}
\label{sec:related}

\paragraph{Preference Optimization and RLHF.}
Reinforcement Learning from Human Feedback (RLHF) typically involves learning a reward model from preferences and then optimizing a policy via PPO~\cite{Schulman2017PPO,Christiano2017RLHF,Ouyang2022InstructGPT}. Direct Preference Optimization (DPO)~\cite{Rafailov2023DPO} simplifies this by deriving a closed-form solution to the KL-constrained reward maximization problem. Recent variants extend this paradigm: IPO~\cite{Azar2024IPO} adds a regularization term to prevent overfitting, SimPO~\cite{Meng2024SimPO} simplifies the reference-free objective.

\paragraph{$f$-Divergences in Machine Learning.}
The $f$-divergence family~\cite{Csiszar1967,AliSilvey1966} provides a unified framework for measuring distributional discrepancy. The Csisz\'ar--Amari $\alpha$-divergence~\cite{Csiszar1967,Amari2016} continuously connects forward and reverse KL. Prior work has explored $f$-divergences in variational inference~\cite{Li2016Renyi}, GANs~\cite{Nowozin2016fGAN}, and imitation learning~\cite{Ghasemipour2020fIL}. In RL, $\alpha$PPO~\cite{Xu2023AlphaPPO} studied $\alpha$-divergence as a trust-region constraint. Recently, APO~\cite{zixian2025apo} explored combining forward and reverse KL dynamics for standard preference optimization. OPO builds on these foundations by decomposing the divergence into independent geometry axes.

\paragraph{Trust-Region Methods.}
TRPO~\cite{Schulman2015TRPO} enforces stability via explicit KL constraints, while PPO~\cite{Schulman2017PPO} approximates this with ratio clipping. ADPO~\cite{Wang2025ADPO} shows that anchored coordinates provide an implicit trust region via temperature-scaled curvature. OPO extends this by replacing the KL-based geometry with a quadratic ($\chi^2$) geometry in ratio coordinates, with the geometric justification provided by the Hilbert Projection Theorem.

%% ====================================================================
%%  SECTION 3 — Preliminaries
%% ====================================================================
\section{Preliminaries: Ratio Coordinates and Conservation}
\label{sec:prelim}

Let $\pik$ denote the anchor (reference) policy at iteration $k$, with $\pik(y)>0$ on its support.

\begin{definition}[Ratio and Log-Ratio Coordinates]
\label{def:ratio_coords}
Define
\begin{equation}
t(y) := \frac{\pi(y)}{\pik(y)}, \qquad
v(y) := t(y)-1 = \frac{\pi(y)}{\pik(y)}-1,
\qquad
\Delta(y) := \log\pi(y)-\log\pik(y).
\end{equation}
Then $t(y)=e^{\Delta(y)}$ and $v(y)=e^{\Delta(y)}-1$.
\end{definition}

\paragraph{Conservation law.}
Probability normalization $\sum_y \pi(y)=1$ is equivalent to
\begin{equation}
\label{eq:conservation}
\E_{\pik}[t]=1 \quad\Longleftrightarrow\quad \E_{\pik}[v]=0.
\end{equation}
Hence admissible fluctuations lie in the closed subspace
\begin{equation}
\label{eq:H0}
\mathcal{H}_0 := \{ v \in L^2(\pik) : \E_{\pik}[v]=0\}.
\end{equation}

\begin{remark}[Geometric Interpretation of $\mathcal{H}_0$]
$\mathcal{H}_0$ is precisely the orthogonal complement of $\mathrm{span}\{\mathbf{1}\}$ in $L^2(\pik)$: every valid fluctuation must be orthogonal to the constant function. Probability mass gained at some outputs must be exactly compensated elsewhere. The term ``orthogonalized'' refers directly to this geometric fact.
\end{remark}

%% ====================================================================
%%  SECTION 4 — Hilbert Space Framework (PRIMARY)
%% ====================================================================
\section{OPO as Orthogonal Projection in Hilbert Space}
\label{sec:hilbert_framework}

This section provides the primary theoretical derivation of OPO. We formulate alignment as a constrained proximal optimization in $\mathcal{H} = L^2(\pik)$ and solve it via the Hilbert Projection Theorem.

\subsection{The $\alpha$-Escort Sampling Field}

Let $P^*$ be an oracle target distribution encoding preference/quality.
A canonical choice is $P^*(y)\propto \exp(A(y))$ where $A$ is an advantage-like signal.

\begin{definition}[$\alpha$-Escort Distribution and Escort Weight]
\label{def:target}
For $\alpha\in[0,1]$, define the escort distribution
\begin{equation}
\label{eq:escort_def}
\rho_\alpha(y) \propto \pik(y)^\alpha\,P^*(y)^{1-\alpha}.
\end{equation}
Define the escort weight (Radon--Nikodym derivative)
\begin{equation}
\label{eq:omega_alpha}
\omega_\alpha(y) := \frac{\rho_\alpha(y)}{\pik(y)}.
\end{equation}
Since $\rho_\alpha$ is normalized,
\begin{equation}
\label{eq:omega_mean_one}
\E_{\pik}[\omega_\alpha]=1.
\end{equation}
\end{definition}

\begin{proposition}[$\alpha$-Geometric Interpolation]
\label{prop:alpha_mixture}
The family $\{\rho_\alpha\}_{\alpha \in [0,1]}$ forms an $e$-geodesic connecting $P^*$ (at $\alpha=0$) to $\pik$ (at $\alpha=1$).
The escort weight simplifies to $\omega_\alpha(y) \propto \bigl(\exp(A(y))/\pik(y)\bigr)^{1-\alpha}$.
\end{proposition}

\paragraph{Geometric semantics.}
$\alpha$ controls how aggressively sampling emphasizes high-quality regions:
$\alpha\to 1$ gives $\omega_\alpha\to 1$ (conservative);
$\alpha\to 0$ gives $\omega_\alpha \to P^*/\pik$ (aggressive).

\paragraph{On-policy implementation (stop-gradient).}
When $\pik=\pi_{\mathrm{old}}$ is frozen during an iteration,
\begin{equation}
\label{eq:omega_approx}
\omega_\alpha(y)
\;\propto\;
\exp\!\big((1-\alpha)A(y)\big)\cdot
\sg\!\Big(\pik(y)^{-(1-\alpha)}\Big).
\end{equation}

\paragraph{Batch normalization of escort weights.}
The escort density $\rho_\alpha(y)\propto \pik(y)^\alpha P^*(y)^{1-\alpha}$ involves an unknown normalizing constant $Z_\alpha$.
In practice we apply batch-wise zero-centering and $z$-scoring of $\omega_\alpha$ over each mini-batch, equivalent to estimating the normalizer locally within the sample batch.

\subsection{Metric Modulation as a Multiplication Operator}

The escort weight induces a natural geometric structure in $\mathcal{H}$.

\begin{lemma}[Boundedness, Self-Adjointness, Positivity]
\label{lem:M_props}
Define the multiplication operator $(\mathcal{M}_\alpha f)(y):=\omega_\alpha(y)\,f(y)$.
If $\omega_\alpha\in L^\infty(\pik)$ and $\omega_\alpha>0$ a.s., then $\mathcal{M}_\alpha$ is bounded, self-adjoint, and positive.
The induced inner product $\langle f,g\rangle_\alpha := \E_{\rho_\alpha}[fg]$ is the escort-modulated geometry.
\end{lemma}

Thus $\alpha$-escort is a smooth \emph{metric modulation}: it preserves support and reweights directions in function space, in contrast to hard masking which truncates support.

\subsection{The Work-Dissipation Functional}

We construct a generalized free-energy functional that balances external work (alignment with the advantage signal) against intrinsic geometric dissipation (deviation from the reference):
\begin{equation}
\label{eq:opo_functional}
\mathcal{J}(v) = \underbrace{\langle \omega_\alpha, v\rangle_{\pik}}_{\text{External Work}} - \underbrace{\frac{\mu}{2}\|v\|_{L^2(\pik)}^2}_{\text{Quadratic Dissipation}}
\end{equation}
where $\mu > 0$ is the stiffness parameter. The alignment problem is:
\begin{equation}
\label{eq:opo_ratio}
\max_{v \in \mathcal{H}_0} \; \mathcal{J}(v)
\end{equation}
The constraint $v \in \mathcal{H}_0$ enforces probability conservation. The dissipation term is exactly the Pearson $\chi^2$ divergence:
$\E_{\pik}[v^2] = \E_{\pik}[(\pi/\pik-1)^2]$.

\paragraph{Orthogonality of the two geometries.}
$\alpha$ enters only through $\omega_\alpha$ (first-order drive);
$\mu$ enters only through the quadratic curvature (second-order geometry).
Sampling and optimization are \emph{structurally orthogonal}.

\subsection{Closed-Form Solution via the Hilbert Projection Theorem}
\label{subsec:projection}

We now derive the closed-form solution as a direct application of the Hilbert Projection Theorem.

\begin{theorem}[OPO Closed-Form via Orthogonal Projection]
\label{thm:closed_form_opo}
By completing the square, the functional~\eqref{eq:opo_functional} becomes:
\begin{equation}
\mathcal{J}(v) = -\frac{\mu}{2}\left\|v - \frac{1}{\mu}\omega_\alpha\right\|^2_{L^2(\pik)} + \frac{1}{2\mu}\|\omega_\alpha\|^2_{L^2(\pik)}
\end{equation}
Maximizing $\mathcal{J}(v)$ over $v \in \mathcal{H}_0$ is equivalent to minimizing $\|v - \omega_\alpha/\mu\|^2$.
Since $\mathcal{H}_0 = \{\mathbf{1}\}^\perp$ is a closed linear subspace, the Hilbert Projection Theorem guarantees a unique solution:
\begin{equation}
\label{eq:v_star}
\boxed{
v^*(y) = \frac{1}{\mu}\mathcal{P}_{\mathcal{H}_0}(\omega_\alpha) = \frac{1}{\mu}\Big(\omega_\alpha(y) - \E_{\pik}[\omega_\alpha]\Big) = \frac{1}{\mu}\Big(\omega_\alpha(y) - 1\Big).
}
\end{equation}
The last equality uses $\E_{\pik}[\omega_\alpha] = 1$ (Eq.~\ref{eq:omega_mean_one}).
\end{theorem}

\begin{proof}
The orthogonal projection $\mathcal{P}_{\mathcal{H}_0}$ onto $\mathcal{H}_0 = \{\mathbf{1}\}^\perp$ is:
\begin{equation}
\mathcal{P}_{\mathcal{H}_0}(f) = f - \frac{\langle f, \mathbf{1}\rangle_{\pik}}{\|\mathbf{1}\|^2_{\pik}}\mathbf{1} = f - \E_{\pik}[f]
\end{equation}
using $\|\mathbf{1}\|^2_{\pik} = \E_{\pik}[1] = 1$. Applying to $f = \omega_\alpha/\mu$ yields:
$v^* = (\omega_\alpha - \E_{\pik}[\omega_\alpha])/\mu = (\omega_\alpha - 1)/\mu$.
\end{proof}

\begin{remark}[The Chemical Potential]
\label{rem:chemical_potential}
The subtracted term $\lambda^* = \E_{\pik}[\omega_\alpha] = 1$ is the Lagrange multiplier enforcing probability conservation. In physics, this plays the role of a \emph{chemical potential}---the energy cost of adding one unit of probability to the system. In our framework, it emerges organically as the component of the target vector projected out by $\mathcal{P}_{\mathcal{H}_0}$, rather than being imposed ad hoc.
\end{remark}

\begin{remark}[$z$-Score Normalization as Conservation-Law Projection]
\label{rem:zscore}
The centering $\omega_\alpha \mapsto \omega_\alpha - \E[\omega_\alpha]$ is exactly $\mathcal{P}_{\mathcal{H}_0}$---the mean-subtraction step of $z$-score normalization.
When $\omega_\alpha(y) = A(y)$ ($\alpha \to 1$), the solution $v^* = (A - \E[A])/\mu$ is the $z$-scored advantage scaled by susceptibility.
Thus, advantage $z$-scoring is not a variance-reduction heuristic: it is the \emph{unique orthogonal projection} enforcing probability conservation in $L^2(\pik)$.
\end{remark}

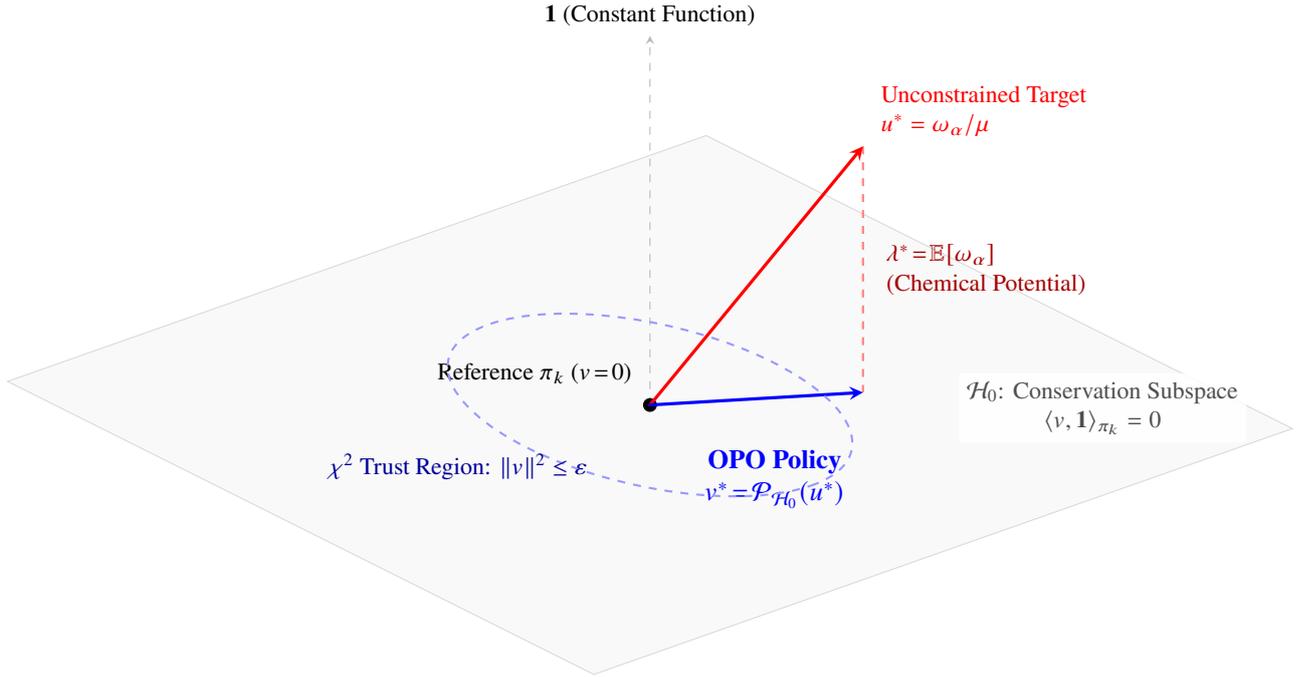
\begin{figure}[htbp]
\centering
\tdplotsetmaincoords{65}{50}

\begin{tikzpicture}[
    tdplot_main_coords,
    scale=3.0,
    >=stealth,
    label_font/.style={font=\footnotesize},
    important_label/.style={font=\small\bfseries},
]

    \coordinate (O) at (0,0,0);

    % H_0 plane (z=0)
    \fill[gray!10, opacity=0.45] (-2.0,-2.0,0) -- (2.0,-2.0,0) -- (2.0,2.0,0) -- (-2.0,2.0,0) -- cycle;
    \draw[gray!30, thin] (-2.0,-2.0,0) -- (2.0,-2.0,0) -- (2.0,2.0,0) -- (-2.0,2.0,0) -- cycle;

    % Normal vector 1 (z-axis)
    \draw[->, gray!55, dashed, thin] (O) -- (0,0,1.8)
        node[above, black, label_font] {$\mathbf{1}$ (Constant Function)};
    \filldraw[black] (O) circle (0.8pt);

    % Unconstrained target u* = omega_alpha / mu (above the plane)
    \coordinate (U_star) at (0.5, 0.8, 1.2);
    % Projection onto H_0 (subtract chemical potential)
    \coordinate (V_star) at (0.5, 0.8, 0);

    % Vertical projection line (chemical potential)
    \draw[dashed, red!50, line width=0.8pt] (U_star) -- (V_star);

    % Main vectors
    \draw[->, very thick, red, line cap=round] (O) -- (U_star);
    \draw[->, very thick, blue, line cap=round] (O) -- (V_star);

    % u* label
    \node[label_font, red, align=left, anchor=south west, xshift=3pt] at (U_star) {
        Unconstrained Target\\$u^* = \omega_\alpha/\mu$
    };

    % Chemical potential label
    \node[label_font, red!65!black, anchor=west, xshift=5pt, align=left] at ($(U_star)!0.5!(V_star)$) {
        $\lambda^*\!=\!\mathbb{E}[\omega_\alpha]$\\
        (Chemical Potential)
    };

    % OPO policy v*
    \node[important_label, blue, align=center, anchor=north east, xshift=-3pt, yshift=-18pt] at (V_star) {
        OPO Policy\\$v^*\!=\!\mathcal{P}_{\mathcal{H}_0}(u^*)$
    };

    % Reference policy
    \node[label_font, anchor=south east, xshift=-3pt, yshift=4pt] at (O) {Reference $\pi_k$ ($v\!=\!0$)};

    % Trust region ball (chi^2 ball)
    \draw[blue!40, dashed, thick] (O) circle [x radius=1.1, y radius=0.7];
    \node[label_font, blue!60!black, anchor=north, yshift=-3pt] at (-0.3, -0.85, 0) {
        $\chi^2$ Trust Region: $\|v\|^2 \le \varepsilon$
    };

    % H_0 label
    \node[label_font, gray!60!black, align=center,
          fill=white, fill opacity=0.85, text opacity=1,
          inner sep=3pt, rounded corners=1pt] at (1.3, 1.5, 0) {
        $\mathcal{H}_0$: Conservation Subspace\\
        $\langle v, \mathbf{1} \rangle_{\pi_k} = 0$
    };

\end{tikzpicture}

\caption{\textbf{Geometric interpretation of OPO.} The theory operates in $L^2(\pi_k)$. The reference policy $\pi_k$ sits at the origin ($v=0$). Valid policy updates must reside in $\mathcal{H}_0$ (gray plane), the codimension-one subspace orthogonal to the constant function $\mathbf{1}$. The unconstrained target $u^* = \omega_\alpha/\mu$ (red) is projected vertically onto $\mathcal{H}_0$ by subtracting the chemical potential $\lambda^* = \mathbb{E}[\omega_\alpha]$, yielding the unique optimal update $v^*$ (blue). The dashed ellipse shows the $\chi^2$ trust region; the quadratic penalty $\frac{\mu}{2}\|v\|^2$ is the Lagrangian dual of this constraint (Proposition~\ref{prop:chi2_duality}).}
\label{fig:opo_geometry}
\end{figure}

\paragraph{Policy update.}
Since $\pi=\pik(1+v)$,
\begin{equation}
\label{eq:policy_update}
\frac{\pi_{k+1}(y)}{\pik(y)}
=
1+\frac{1}{\mu}\Big(\omega_\alpha(y)-1\Big).
\end{equation}

\paragraph{Loss-minimization form.}
Equivalently,
\begin{equation}
\label{eq:opo_loss}
\boxed{
\loss_{\mathrm{OPO}}(v)
=
-\E_{\pik}[\omega_\alpha(y)\,v(y)]
+
\frac{\mu}{2}\E_{\pik}[v(y)^2],
\qquad \E_{\pik}[v]=0.
}
\end{equation}

\begin{figure}[htbp]
\centering
\includegraphics[width=0.88\textwidth]{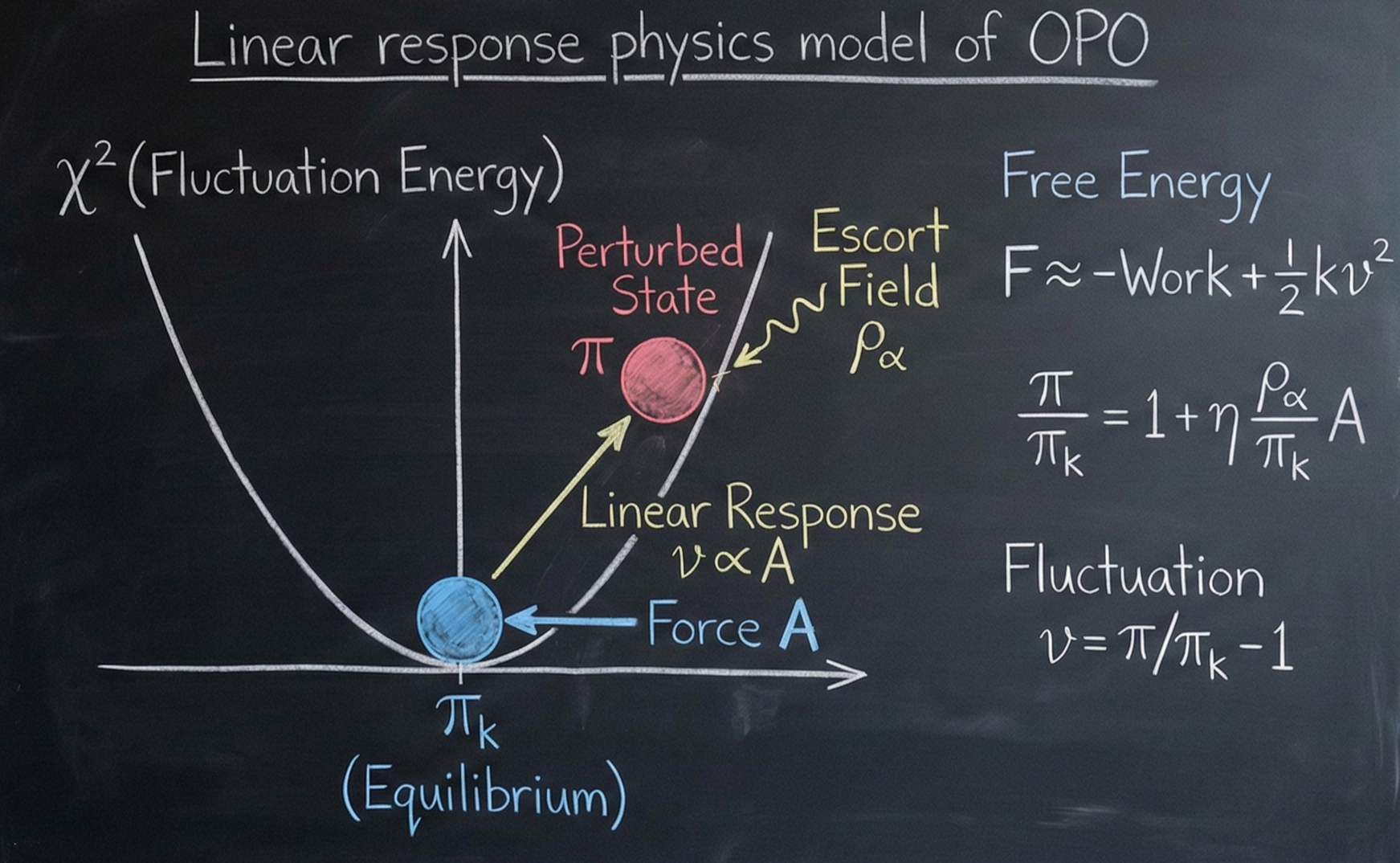}
\caption{
Hilbert-space geometry of OPO.
The anchor policy $\pik$ sits at the origin of $L^2(\pik)$.
The escort field $\rho_\alpha$ induces the target vector $\omega_\alpha/\mu$ in the full Hilbert space $\mathcal{H}$.
The conservation law $\E_{\pik}[v]=0$ defines the codimension-one subspace $\mathcal{H}_0 = \{\mathbf{1}\}^\perp$.
The Hilbert Projection Theorem yields $v^* = \mathcal{P}_{\mathcal{H}_0}(\omega_\alpha/\mu) = (\omega_\alpha - 1)/\mu$ as the unique closest point in $\mathcal{H}_0$.
The subtracted mean ($\lambda^* = 1$) is the chemical potential enforcing conservation.
}
\label{fig:opo_physics}
\end{figure}

\begin{remark}[Contrast with Fisher--Rao (KL) Geometry]
\label{rem:fisher_contrast}
In KL-based methods, the natural metric is the Fisher--Rao metric $g_{ij}^{\mathrm{FR}} = \E_\pi[\partial_i \log \pi \cdot \partial_j \log \pi]$, whose curvature depends on $\pi$. In contrast, $\chi^2$ geometry operates in $L^2(\pik)$ with \emph{constant} inner product:
\begin{center}
\begin{tabular}{lcc}
\toprule
 & \textbf{KL / Fisher--Rao} & \textbf{$\chi^2$ / $L^2(\pik)$} \\
\midrule
Inner product & Fisher metric (policy-dependent) & $L^2(\pik)$ (fixed at anchor) \\
Hessian & $\nabla^2 \propto \beta^2 \sigma(1{-}\sigma)$ & $\nabla^2 = \mu I$ (constant) \\
Perturbation theory & Nonlinear (exponential) & \textbf{Linear} (first-order exact) \\
\bottomrule
\end{tabular}
\end{center}
\end{remark}

%% ====================================================================
%%  SECTION 5 — Equivalent Interpretations
%% ====================================================================
\section{Equivalent Interpretations}
\label{sec:equivalences}

The Hilbert projection derivation is our primary framework. Here we show that the \emph{same} closed-form update arises independently from two other mathematical structures, confirming uniqueness.

\subsection{Mirror Descent Interpretation}
\label{subsec:mirror}

Define the Euclidean mirror map $\Psi(v):=\frac{1}{2}\|v\|_{L^2(\pik)}^2$. The induced Bregman divergence is:
\begin{equation}
\label{eq:bregman_chi2}
D_\Psi(\pi\|\pik) = \frac{1}{2}\E_{\pik}\!\left[\left(\frac{\pi}{\pik}-1\right)^2\right],
\end{equation}
which is the Pearson $\chi^2$ divergence. A mirror/proximal step driven by $\omega_\alpha$ is:
\begin{equation}
v^*=\argmin_{v\in \mathcal H_0}
\left\{
-\langle \omega_\alpha, v\rangle + \frac{\mu}{2}\|v\|^2
\right\},
\end{equation}
which is identical to the work--dissipation formulation~\eqref{eq:opo_ratio}.
Thus mirror descent with a Euclidean mirror map and the Hilbert projection define the \emph{same} update. Sampling geometry ($\alpha$) affects only the drive $\omega_\alpha$; optimization geometry ($\mu$) affects only the proximal curvature.

\subsection{Linear Response Interpretation}
\label{subsec:linear_response}

From near-equilibrium statistical mechanics~\cite{Kubo1966,LandauLifshitz1980}, a system at equilibrium $\pik$ subject to a weak external perturbation $\omega_\alpha$ responds linearly:
\begin{equation}
v^*(y) = \chi \cdot \big(\omega_\alpha(y) - \E[\omega_\alpha]\big)
\end{equation}
where $\chi = 1/\mu$ is the \emph{susceptibility}---the response per unit driving force. The centering enforces the conservation constraint (probability normalization as a ``thermodynamic'' identity). This is precisely the fluctuation--dissipation theorem: the response equals susceptibility times centered driving force.

\subsection{Uniqueness}

\begin{corollary}[Uniqueness of Quadratic Proximal Response]
The Hilbert projection, Euclidean mirror descent, and linear response all yield $v^* = (\omega_\alpha - 1)/\mu$. This confirms that OPO is the \emph{unique} quadratic proximal response under ratio geometry with conservation constraint.
\end{corollary}

%% ====================================================================
%%  SECTION 6 — Implementation
%% ====================================================================
\section{Implementable Surrogate and Algorithm}
\label{sec:implementation}

\subsection{Log-Ratio Surrogate for LLMs}

LLMs output log-probabilities. Let
$\Delta_\theta(y) := \log \pi_\theta(y) - \log \pik(y)$.
Since $v=e^{\Delta}-1\approx \Delta$ for small trust regions,
the practical surrogate is
\begin{equation}
\label{eq:opo_log_approx}
\boxed{
\loss_{\mathrm{OPO}}^{\log}(\theta)
=
-\E_{\pik}\big[\omega_\alpha(y)\,\Delta_\theta(y)\big]
+
\frac{\mu}{2}\E_{\pik}\big[\Delta_\theta(y)^2\big].
}
\end{equation}
Anchoring $\pik=\pi_{\mathrm{old}}$ resets $\Delta$ near zero each iteration, supporting the local approximation.

\subsection{Algorithm and Flowchart}

\begin{figure}[htbp]
\centering
\begin{tikzpicture}[
    node distance=1.5cm,
    auto,
    block/.style={
      rectangle,
      draw,
      fill=blue!10,
      text width=6.5em,
      text centered,
      rounded corners,
      minimum height=3em
    },
    line/.style={draw, -latex', thick},
]
    \node [block] (policy) {Current Policy $\pi_\theta$};
    \node [block, right of=policy, node distance=3.5cm] (sampling) {Sampling $(x, y)$};
    \node [block, right of=sampling, node distance=3.5cm] (eval) {Evaluation $A(y)$};
    
    \node [block, below of=sampling, node distance=2.5cm, fill=green!10, text width=13em] (opo) {\textbf{Hilbert Projection}\\ 1. Ratio $v = \pi/\pi_{old} - 1$ \\ 2. $\alpha$-Weight $\omega_\alpha = (e^A/\tilde{p})^{1-\alpha}$ \\ 3. Project: $\frac{\mu}{2}\, v^2$ penalty};
    
    \node [block, below of=opo, node distance=2.5cm] (update) {Update $\theta$};
    
    \path [line] (policy) -- (sampling);
    \path [line] (sampling) -- (eval);
    \path [line] (eval) |- (opo);
    \path [line] (sampling) -- (opo); 
    \path [line] (opo) -- (update);
    \path [line] (update) -| node [near start] {Next Iteration} (policy);
    
\end{tikzpicture}
\caption{Flowchart of the OPO framework. The ``Hilbert Projection'' block (green) embodies the work--dissipation structure: the $\alpha$-escort (step~2) shapes the external work, while the $\chi^2$ penalty (step~3) provides constant-curvature dissipation. The two axes are structurally orthogonal.}
\label{fig:flowchart}
\end{figure}
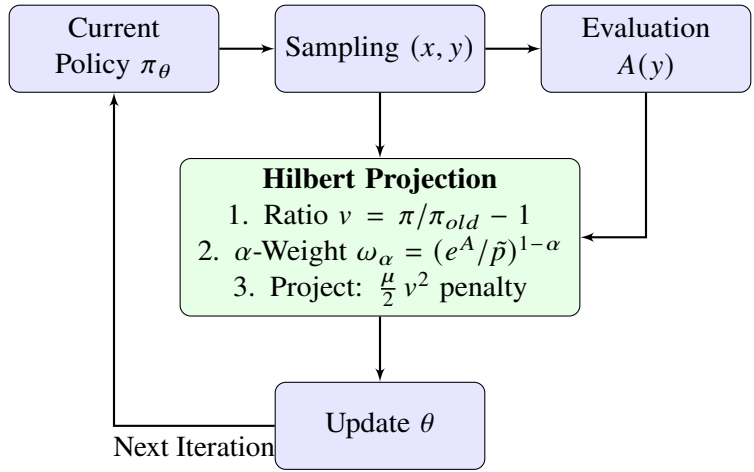

\begin{algorithm}[htbp]
\caption{Orthogonalized Policy Optimization (OPO)}
\label{alg:opo}
\begin{algorithmic}[1]
\Require Initial policy $\pi_\theta$; parameters $\alpha \in [0,1],\; \mu > 0,\; \eta > 0$
\For{each iteration}
    \State \textbf{Anchor:} $\piref \leftarrow \pi_\theta$ \Comment{On-policy anchoring}
    \State \textbf{Rollout:} Sample $(x, y) \sim \pi_\theta$; compute rewards $R(x, y)$
    \State \textbf{Advantage:} $A(y) \leftarrow \mathrm{zscore}(R(x, y))$ \Comment{$\mathcal{P}_{\mathcal{H}_0}$ projection (Remark~\ref{rem:zscore})}
    \State \textbf{Escort Weight:} $\omega_\alpha(y) \leftarrow \exp\!\big((1{-}\alpha)\, A(y)\big) \cdot \operatorname{sg}\!\big(\piref(y)^{-(1-\alpha)}\big)$ \Comment{Eq.~\ref{eq:omega_approx}}
    \State \textbf{Log-Ratio:} $\Delta_\theta(y) \leftarrow \log \pi_\theta(y) - \log \piref(y)$
    \State \textbf{Loss:} $\mathcal{L} \leftarrow \frac{1}{N} \sum_y \big[ -\omega_\alpha(y)\, \Delta_\theta(y) + \frac{\mu}{2}\, \Delta_\theta(y)^2 \big]$
    \State \textbf{Update:} $\theta \leftarrow \theta - \eta \nabla_\theta \mathcal{L}$
\EndFor
\end{algorithmic}
\end{algorithm}

%% ====================================================================
%%  SECTION 7 — Theoretical Analysis
%% ====================================================================
\section{Theoretical Analysis}
\label{sec:analysis}

\subsection{Constant Curvature and Orthogonal Control}

\begin{theorem}[Decoupling of Sampling and Optimization Geometry]
\label{thm:decoupling}
Consider the OPO objective in $v$-space.
Then:
\begin{enumerate}[nosep]
\item The first-order drive depends on $\alpha$ only through $\omega_\alpha$:
$\nabla_v \loss_{\mathrm{OPO}} = -\omega_\alpha + \mu v$.
\item The second-order curvature is constant and independent of $\alpha$:
$\nabla_v^2 \loss_{\mathrm{OPO}} = \mu I$.
\end{enumerate}
\end{theorem}

Changing $\alpha$ reshapes the target (what to learn) without altering curvature (how to learn).
Changing $\mu$ alters curvature without reshaping the target.

\subsection{Global Contraction in $v$-Space}

Gradient descent $v^{(m+1)} = v^{(m)} - \eta(\mu v^{(m)} - \omega_\alpha)$ satisfies
\begin{equation}
v^{(m+1)} - v^* = (1-\eta\mu)\,(v^{(m)}-v^*).
\end{equation}

\begin{corollary}[Linear Rate]
\label{cor:linear_rate}
For $0<\eta<2/\mu$,
$\|v^{(m)}-v^*\| \le |1-\eta\mu|^m \,\|v^{(0)}-v^*\|$.
The contraction rate depends only on $\mu$ and is independent of $\alpha$.
\end{corollary}

\subsection{Why KL-Based Objectives Saturate}

In KL-regularized preference optimization, local curvature scales like $\sigma(m)(1-\sigma(m))$ and vanishes exponentially as $|m|\to\infty$.
By contrast, OPO has constant curvature $\mu I$ and linear restoring force $\|\nabla_v \loss\| = \|\mu(v-v^*)\|$, which remains proportional to distance-to-equilibrium.

Figure~\ref{fig:toy_saturation} visualizes this contrast on a binary logit toy model ($z_{\mathrm{ref}}=0$). Panel~(a) plots $|\partial\mathcal{L}/\partial z|$ versus the model confidence $p=\sigma(z)$: SFT, DPO, GRPO, DAPO, and GSPO all decay to zero in the high-confidence regime, while OPO's $\chi^2$ gradient grows linearly. Panel~(b) zooms into $p\ge0.5$ on a logarithmic scale, revealing that clipped objectives (GRPO, DAPO, GSPO) suffer a hard cliff to machine-epsilon once the ratio exits the clip window, and DPO saturates exponentially. OPO is the only objective whose gradient magnitude remains $O(1)$ throughout.

\begin{figure}[htbp]
\centering
\includegraphics[width=0.95\textwidth]{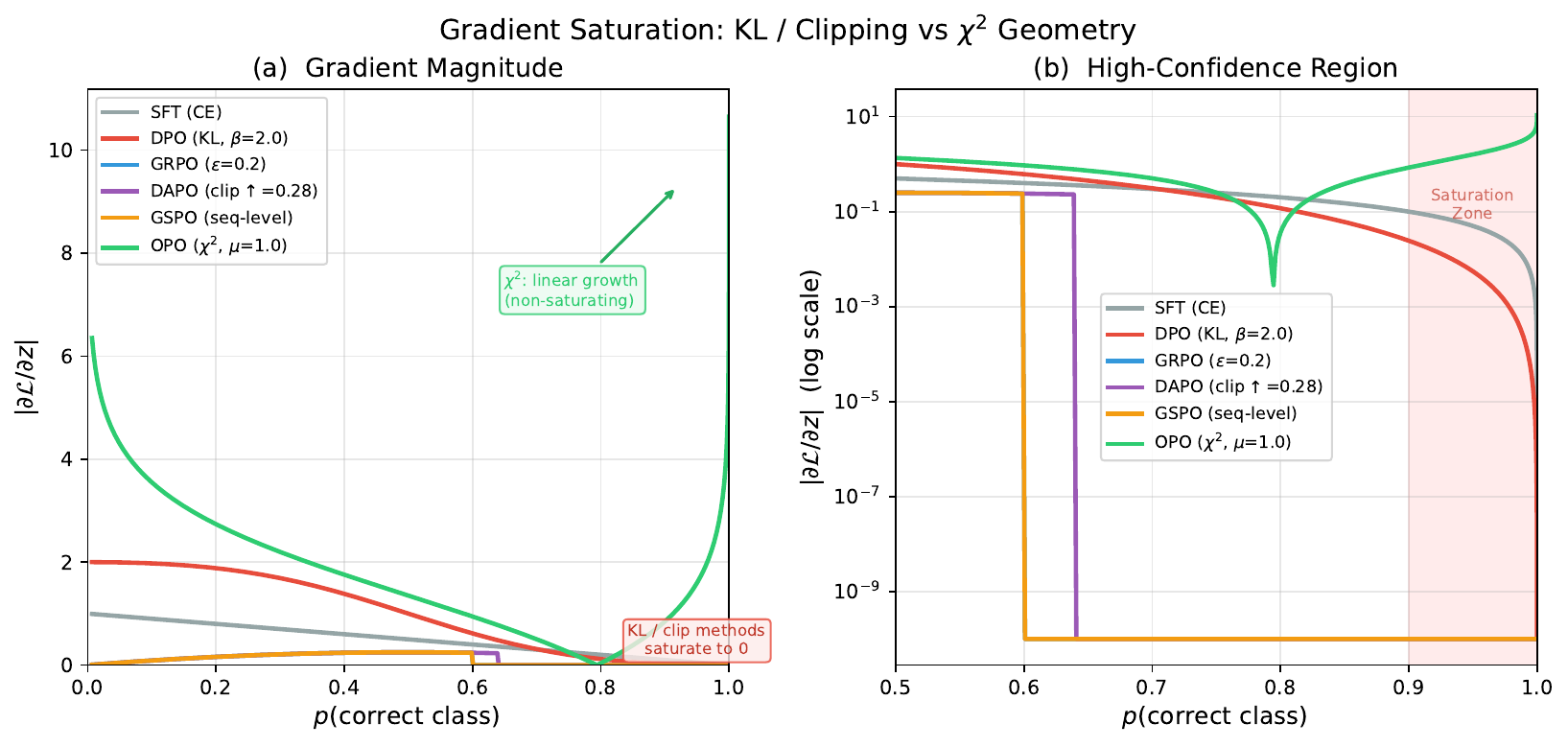}
\caption{Toy gradient saturation experiment (binary logit, $z_{\mathrm{ref}}=0$). (a)~Gradient magnitude vs.\ model confidence: OPO ($\chi^2$, green) exhibits linear growth while all other objectives saturate. (b)~Log-scale zoom on $p\ge 0.5$: clipped objectives cliff-drop by up to 9 orders of magnitude; OPO maintains non-vanishing gradients throughout.}
\label{fig:toy_saturation}
\end{figure}

\subsection{Trust-Region Duality}

\begin{proposition}[$\chi^2$ Trust-Region Duality]
\label{prop:chi2_duality}
The constrained problem
$\max_{v\in\mathcal H_0} \langle \omega_\alpha, v\rangle$ s.t. $\|v\|^2 \le \varepsilon$
has Lagrangian relaxation $\max_{v\in\mathcal H_0} \langle \omega_\alpha, v\rangle - \frac{\mu}{2}\|v\|^2$,
with $\mu$ as the multiplier.
Moreover, $\|v\|^2 = \E_{\pik}[(\pi/\pik-1)^2]$ is the Pearson $\chi^2$ divergence.
\end{proposition}

\subsection{Positivity Safeguards}
\label{sec:positivity}

The closed-form linear response $v^*(y)=(\omega_\alpha(y)-1)/\mu$ is derived in the tangent space of the probability manifold at $\pik$, and only guarantees first-order stationarity of the proximal objective. It does \emph{not} enforce $\pik(y)(1+v^*(y))\ge 0$ outside a local neighborhood.

\begin{proposition}[Local Positivity Condition]
\label{prop:positivity}
The update $\pi_{k+1}=\pik(1+v^*)$ is a valid probability distribution whenever $\mu \ge \|\omega_\alpha\|_\infty - 1$, since then $v^*(y)=(\omega_\alpha(y)-1)/\mu \ge -1$ for all $y$.
\end{proposition}

In practice, positivity is maintained by one or more of: (i)~choosing $\mu$ large enough relative to the escort weight range; (ii)~clipping $v^*$ to $[-1+\epsilon,\,+\infty)$; (iii)~projecting onto the probability simplex after the update.

%% ====================================================================
%%  SECTION 8 — Experiments
%% ====================================================================
\section{Experiments}
\label{sec:experiments}

We conduct two sets of experiments to validate OPO's theoretical predictions. Experiment~1 uses the full MATH Level 3 training set to evaluate long-horizon training dynamics and gradient saturation behavior. Experiment~2 uses a deliberately small training subset with out-of-distribution validation to test generalization under data scarcity.

\subsection{Experiment 1: Training Dynamics and Gradient Saturation}

\paragraph{Setup.} We use Qwen3-1.7B as the base model and train on MATH Level 3 problems using the VERL framework. All methods use identical training configurations: 4 epochs, batch size 32, learning rate $2 \times 10^{-6}$, and 6 rollout generations per prompt.

\paragraph{Baseline Configurations.}
\begin{itemize}
    \item \textbf{GRPO} (Group Relative Policy Optimization)~\cite{Shao2024GRPO}: Standard token-level policy gradient with group-normalized advantages. Uses vanilla PPO loss mode without a value function critic.
    \item \textbf{GSPO} (Group Sequence Policy Optimization)~\cite{Zheng2025GSPO}: Applies sequence-level importance ratios rather than token-level ratios, aiming for more granular credit assignment at the trajectory level.
    \item \textbf{DAPO} (Decoupled Clip and Dynamic sAmpling Policy Optimization)~\cite{Yu2025DAPO}: An advanced baseline incorporating four key mechanisms:
    \begin{enumerate}
        \item \textbf{Clip Higher}: Raises the PPO clip upper bound (e.g., $1+\epsilon \to 1.28$) to mitigate entropy collapse and preserve exploration.
        \item \textbf{Dynamic Sampling}: dynamically filters ``all-correct'' or ``all-wrong'' samples that contribute zero advantage, focusing training on informative samples.
        \item \textbf{Token-Level Policy Gradient}: Standardizes advantage weights across tokens in a mini-batch to prevent gradient dilution in long sequences.
        \item \textbf{Overlong Reward Shaping}: Applies a length-aware penalty and masks gradients for truncated responses to reduce noise.
    \end{enumerate}
    \item \textbf{OPO}: The proposed method with $\alpha = 0.4$, $\mu = 1.0$, on-policy anchoring ($\piref = \pi_{\text{old}}$), and adaptive $\tau$ with range $[0.2, 1.5]$. Note: $1-\alpha = 0.6$ controls the effective advantage amplification in Eq.~\eqref{eq:omega_approx}.
\end{itemize}

\paragraph{Results.} Table~\ref{tab:results} summarizes the key metrics across all four algorithms.

\begin{table}[htbp]
\centering
\caption{Experiment 1: Comparison of alignment algorithms on Qwen3-1.7B + full MATH Level 3. Reward and gradient norm are averaged over the final 20 training steps. Higher gradient norm at convergence, combined with stable (non-exploding) training dynamics, indicates absence of gradient saturation rather than optimization instability.}
\label{tab:results}
\begin{tabular}{lccc}
\toprule
\textbf{Algorithm} & \textbf{Mean Reward} & \textbf{Grad Norm} & \textbf{Characteristics} \\
\midrule
GRPO & 0.686 & 0.61 & Standard baseline, stable \\
GSPO & 0.713 & 0.50 & Strong, some gradient decay \\
DAPO & 0.67* & 0.22 & Conservative, early plateau \\
\textbf{OPO (Ours)} & \textbf{0.756} & \textbf{0.90} & Best reward, non-saturating gradients \\
\bottomrule
\end{tabular}
\end{table}

\paragraph{Algorithm Characteristics.}
\begin{itemize}
    \item \textbf{GRPO}: Serves as the canonical baseline. Achieves reasonable performance (69\% acc) with moderate gradient norms. The token-level normalization provides stability but may average out strong signals.
    \item \textbf{GSPO}: Improves upon GRPO (71\% acc) via sentence-level credit assignment. However, gradient norms (0.50) are lower than OPO's, suggesting some saturation in high-confidence regimes.
    \item \textbf{DAPO}: Exhibits the lowest gradient norms (0.22) and plateaus early. The conservative clipping constraints appear overly restrictive for reasoning tasks where continued learning is beneficial. *Note: Reward fluctuated between 0.59--0.67.
    \item \textbf{OPO}: Achieves the highest mean reward (0.756) while maintaining the largest gradient norms (0.90). This empirically validates the theoretical prediction: the $\chi^2$ geometry provides non-saturating linear gradients, allowing the model to continue learning where others plateau.
\end{itemize}

\paragraph{Training Dynamics.} Figure~\ref{fig:training} shows the training accuracy (mean reward) over 224 training steps.
\begin{itemize}
    \item \textbf{Early Stage}: GRPO exhibits the fastest initial learning, surpassing OPO in the first 50 steps. This is consistent with standard policy gradient methods having aggressive early updates.
    \item \textbf{Late Stage Cross-over}: As training progresses (after step 100), GRPO and GSPO begin to saturate. In contrast, OPO maintains a steady improvement rate, eventually overtaking all baselines to reach the highest final accuracy ($\sim$76\%). This confirms that the $\chi^2$-induced linear drive prevents the ``vanishing gradient'' problem common in KL-constrained methods as confidence increases.
\end{itemize}

\begin{figure}[htbp]
\centering
\includegraphics[width=0.85\textwidth]{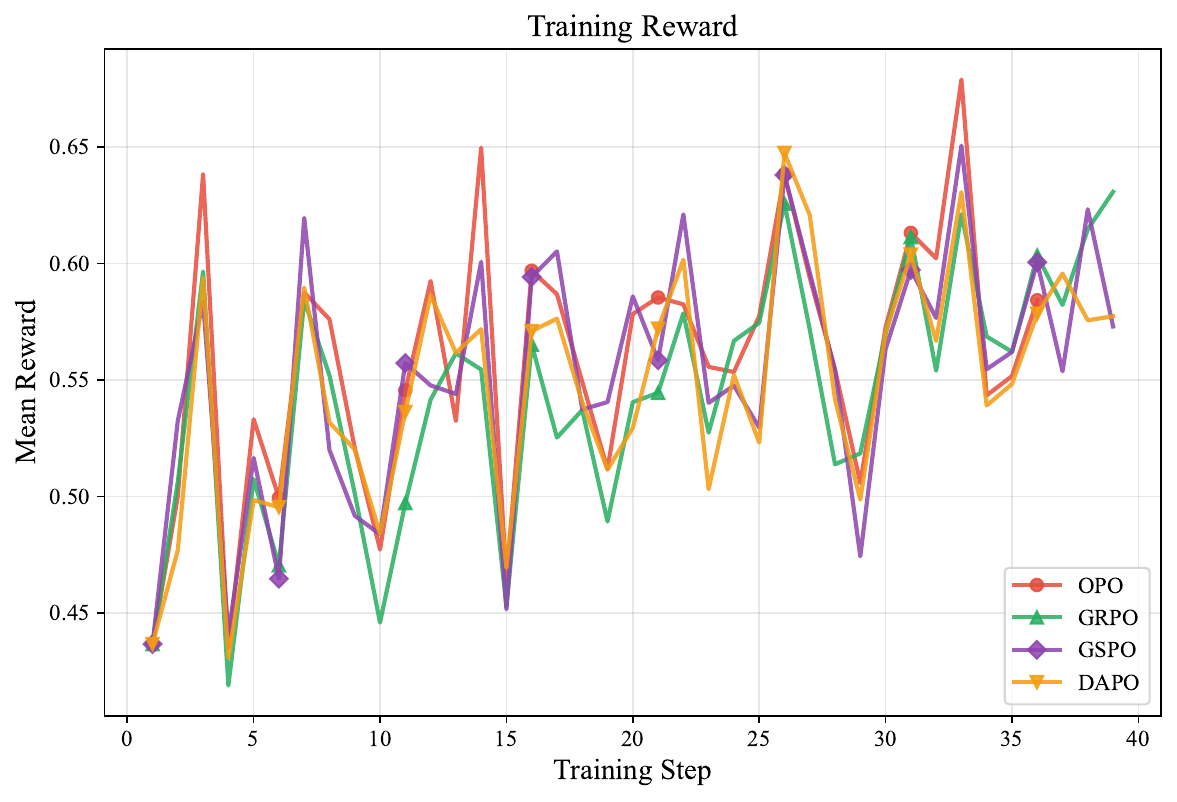}
\caption{Training dynamics on Qwen3-1.7B Math RL. Note the cross-over behavior: GRPO (green) starts strong but plateaus, while OPO (red) sustains improvement and achieves the highest final performance.}
\label{fig:training}
\end{figure}

\paragraph{Gradient Behavior.} Figure~\ref{fig:gradient} illustrates the gradient norm dynamics across training. OPO exhibits consistently higher but stable (non-exploding) gradient norms throughout training, validating the non-saturating property of the $\chi^2$ geometry. The stability of these norms rules out the alternative explanation of optimization instability or under-regularization. DAPO shows severely diminished gradients, explaining its early plateau.

\begin{figure}[htbp]
\centering
\includegraphics[width=0.85\textwidth]{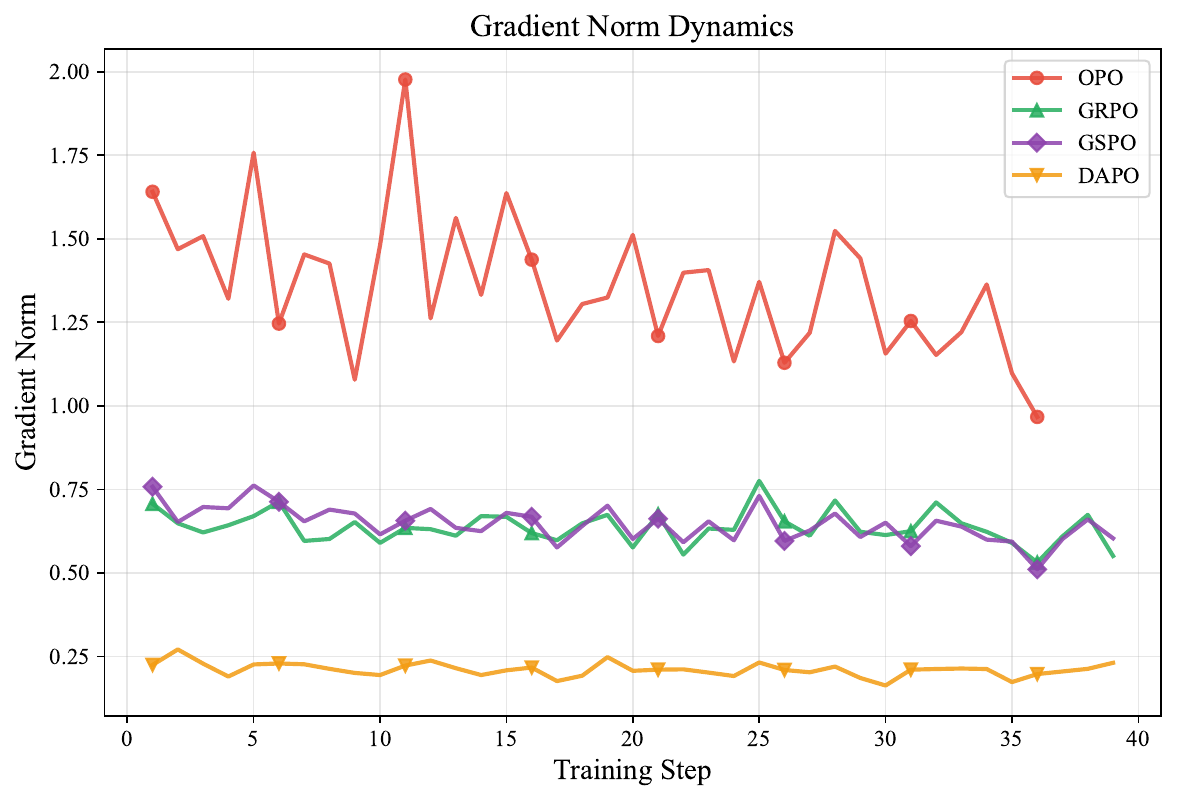}
\caption{Gradient norm comparison. OPO maintains healthy gradient norms (0.9) throughout training, while DAPO (0.22) and GSPO (0.50) exhibit gradient decay.}
\label{fig:gradient}
\end{figure}

\paragraph{Gradient Saturation Analysis.} Figure~\ref{fig:grad_saturation} provides direct evidence for OPO's non-saturating property. Panel~(a) plots gradient norms against mean reward (a confidence proxy). As the model becomes more confident (higher reward), GRPO and GSPO gradients exhibit a decaying trend, and DAPO gradients remain near zero. In contrast, OPO maintains elevated gradient norms across all confidence levels.

Panel~(b) plots the \emph{gradient response efficiency} $\|\nabla\|/\bigl(\tfrac{1}{2}\text{AdvRange}\bigr)$ over training steps, normalizing gradient magnitude by the advantage signal amplitude. OPO sustains an efficiency of ${\sim}1.0$--$1.5$ throughout training, approximately $5{\times}$ higher than GRPO (${\sim}0.2$--$0.3$) and DAPO (${\sim}0.1$). This confirms the linear response prediction of Theorem~\ref{thm:closed_form_opo}: under $\chi^2$ geometry, each unit of advantage signal produces a proportional gradient update, whereas KL-based objectives attenuate the signal through their policy-dependent curvature.

\begin{figure}[htbp]
\centering
\includegraphics[width=0.95\textwidth]{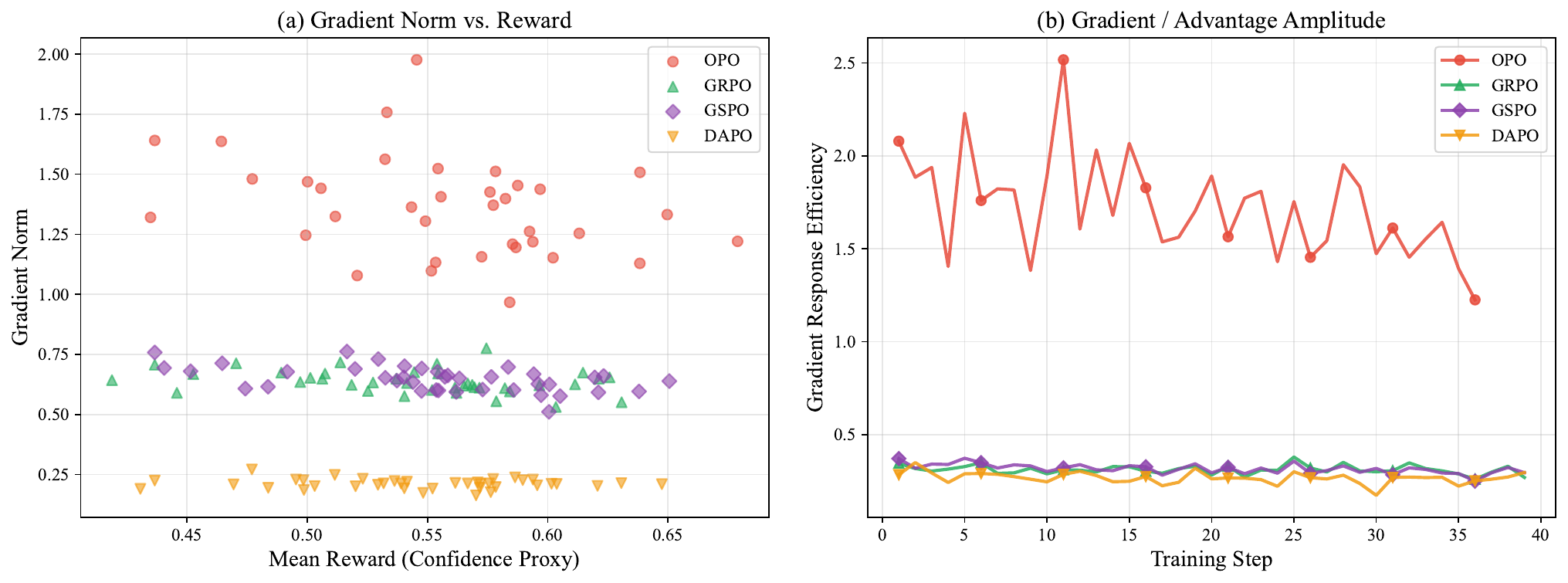}
\caption{Gradient saturation analysis. (a)~Gradient norm vs.\ mean reward (confidence proxy): OPO maintains non-saturating gradients as confidence grows, while baselines decay. (b)~Gradient response efficiency (gradient norm normalized by advantage amplitude) over training: OPO sustains ${\sim}5{\times}$ higher gradient-per-signal than baselines, confirming the constant-curvature prediction of $\chi^2$ geometry.}
\label{fig:grad_saturation}
\end{figure}

\paragraph{Entropy Dynamics.} Figure~\ref{fig:entropy} depicts the policy entropy throughout training. OPO maintains consistently higher entropy levels compared to baselines. We attribute this to the granularity of supervision: GRPO and DAPO utilize \textbf{token-level} updates, imposing dense supervision that forces the policy to collapse rapidly into specific phrasings. In contrast, OPO (and GSPO) employs \textbf{sequence-level} updates, where $y$ denotes an entire response trajectory and the loss is averaged over complete sequences rather than individual tokens. This ``sparse'' supervision aligns the total trajectory probability without micromanaging individual tokens, thereby preserving policy diversity and exploration potential.

\begin{figure}[htbp]
\centering
\includegraphics[width=0.85\textwidth]{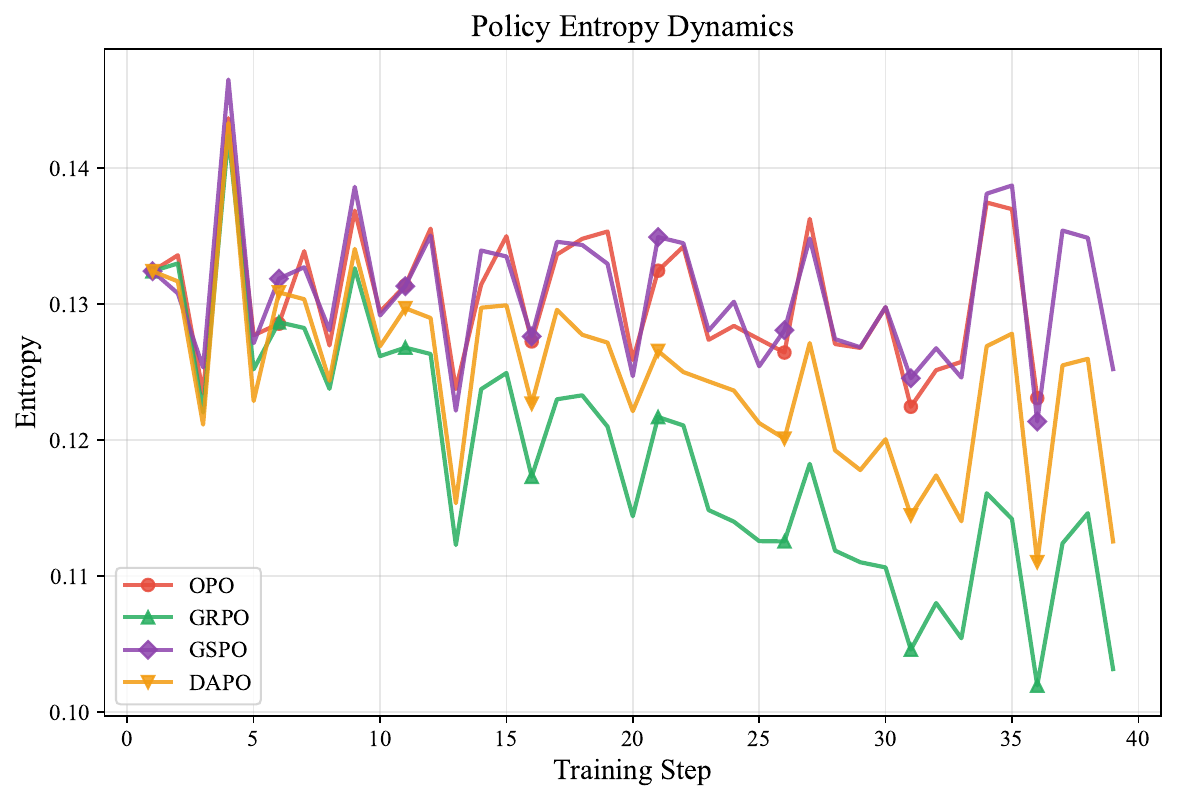}
\caption{Entropy dynamics. OPO maintains higher entropy than baselines, preventing premature mode collapse and enabling sustained exploration throughout the training process.}
\label{fig:entropy}
\end{figure}

\paragraph{Summary of Experiment 1.} The long-horizon experiment confirms that OPO's orthogonalized design---combining $\alpha$-weighted sampling with $\chi^2$ optimization geometry---prevents gradient saturation and enables continued learning in high-confidence regimes. The gradient saturation analysis (Figure~\ref{fig:grad_saturation}) provides the most direct evidence: OPO's gradient magnitude remains proportional to advantage amplitude (linear response), while KL-based baselines exhibit suppressed gradients regardless of the driving signal.

\subsection{Experiment 2: Out-of-Distribution Generalization}

To further evaluate OPO under data scarcity and test generalization to harder problems, we conduct a second experiment with a deliberately reduced training set and out-of-distribution validation.

\paragraph{Setup.} Using the same VERL framework~\cite{Sheng2024HybridFlow} on 4$\times$ RTX 4090 GPUs, we train Qwen3-1.7B on approximately 10\% of MATH Level 3 problems (sampled with seed 42). Validation is performed every 10 training steps on 100 held-out MATH Level 4 problems---a strictly harder difficulty tier not seen during training. All methods share identical hyperparameters: batch size 48, learning rate $2 \times 10^{-6}$, 8 epochs over the training data, and $G=6$ rollout generations per prompt. The same baseline configurations (GRPO, GSPO, DAPO) and OPO hyperparameters are used.

\paragraph{Results.} Table~\ref{tab:generalization} summarizes the metrics for the generalization experiment.

\begin{table}[htbp]
\centering
\caption{Experiment 2: Out-of-distribution generalization. All methods use Qwen3-1.7B trained on $\sim$10\% of MATH Level 3, validated on 100 MATH Level 4 problems. Mean Reward is averaged over the entire training process.}
\label{tab:generalization}
\begin{tabular}{lcccc}
\toprule
\textbf{Algorithm} & \textbf{Mean Reward}$\uparrow$ & \textbf{Val Acc (L4)}$\uparrow$ & \textbf{Grad Norm} & \textbf{Entropy} \\
\midrule
GRPO & 0.544 & 44\% & 0.674 & 0.115 \\
DAPO & 0.548 & 44\% & 0.213 & 0.126 \\
GSPO & 0.553 & \textbf{48\%} & 0.623 & 0.128 \\
\textbf{OPO (Ours)} & \textbf{0.558} & \textbf{48\%} & \textbf{1.279} & 0.126 \\
\bottomrule
\end{tabular}
\end{table}

\paragraph{Analysis.}
\begin{itemize}
    \item \textbf{Generalization vs.\ Training Reward.} OPO achieves the highest mean reward (0.558) over the training process, which translates to superior generalization (48\% validation accuracy on MATH Level 4, tied with GSPO). GRPO and DAPO plateau at 44\% despite comparable training rewards, suggesting that clipping-based methods tend to exploit the training distribution rather than learning generalizable reasoning patterns.
    \item \textbf{Gradient Dynamics.} Consistent with Experiment 1, OPO maintains substantially higher gradient norms (1.28) than ratio-clipped methods (GRPO: 0.67, GSPO: 0.62, DAPO: 0.21), confirming the non-saturation property under data-scarce conditions as well.
    \item \textbf{Entropy.} All sequence-level methods (OPO, GSPO) preserve higher entropy than token-level methods (GRPO), consistent with the mechanism discussed in Experiment 1.
\end{itemize}

\paragraph{Generalization Trajectories.} Figure~\ref{fig:val_accuracy} shows the validation accuracy on MATH Level 4 over training. OPO starts strong and maintains competitive generalization throughout, while GRPO and DAPO plateau at lower levels despite continued training.

\begin{figure}[htbp]
\centering
\includegraphics[width=0.85\textwidth]{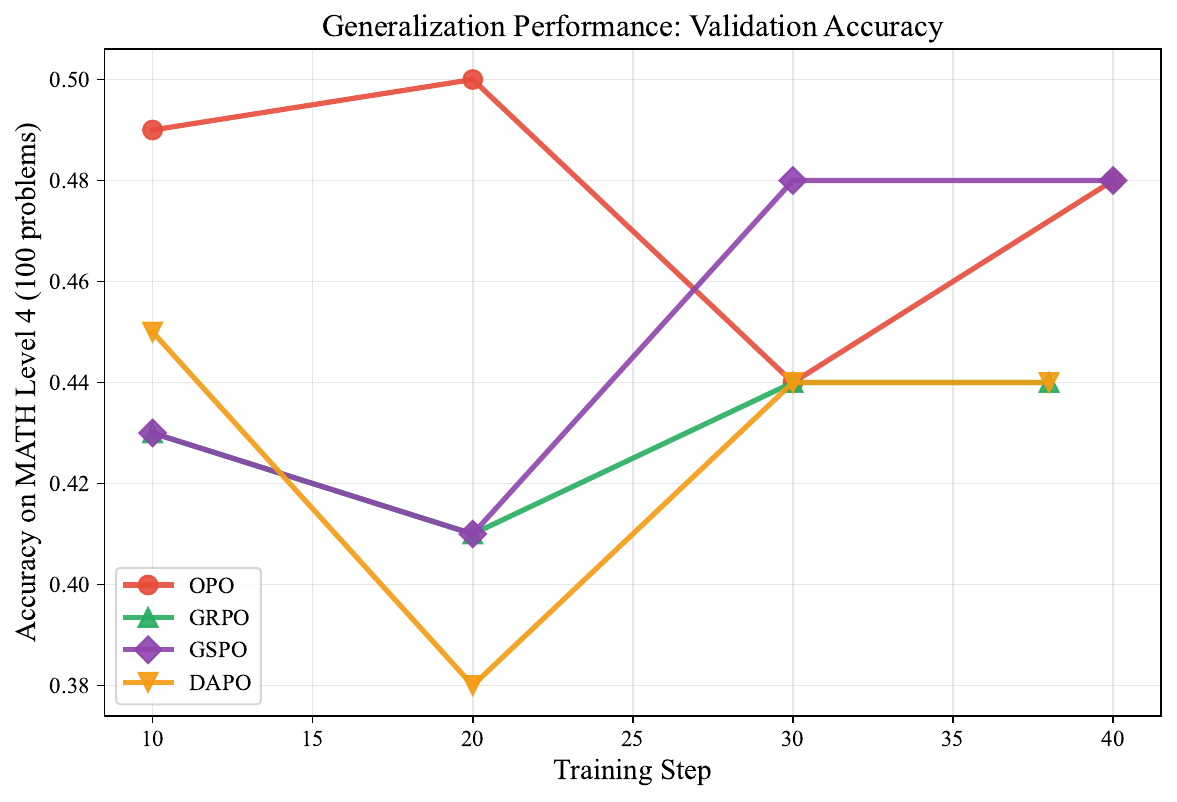}
\caption{Validation accuracy on held-out MATH Level 4 problems (out-of-distribution). OPO and GSPO achieve 48\% accuracy, outperforming GRPO and DAPO (44\%).}
\label{fig:val_accuracy}
\end{figure}

\paragraph{Summary of Experiment 2.} Under data scarcity and out-of-distribution evaluation, OPO achieves the best combination of training reward and generalization, confirming that the $\chi^2$ geometry's non-saturating gradients produce more transferable learning signals than clipping-based alternatives.

\section{Discussion}

\paragraph{Relation to Existing Methods.}
OPO can be viewed as occupying a previously unexplored point in the design space:
\begin{itemize}
    \item $\alpha \to 1$ (conservative escort): $\omega_\alpha \to 1$, recovering supervised fine-tuning--like uniform weighting.
    \item $\alpha \to 0$ (aggressive escort) with KL geometry: the exponential-type weighting resembles KL-regularized preference optimization (e.g., DPO), but inherits gradient saturation.
    \item \textbf{OPO}: moderate $\alpha$ (peak-seeking escort) \emph{combined with} Euclidean ($\chi^2$) geometry---non-saturating gradients with advantage-aware sampling.
\end{itemize}

\paragraph{Geometry Comparison.}
Table~\ref{tab:geometry} summarizes the structural differences among representative methods.

\begin{table}[htbp]
\centering
\caption{Comparison of sampling and optimization geometries across alignment methods.}
\label{tab:geometry}
\begin{tabular}{lcc}
\toprule
\textbf{Method} & \textbf{Sampling Geometry} & \textbf{Optimization Geometry} \\
\midrule
PPO / GRPO & implicit KL & KL curvature (ratio clip) \\
DPO & KL & KL curvature (sigmoid) \\
\textbf{OPO (Ours)} & $\alpha$-escort (tunable) & $\chi^2$ curvature (constant $\mu I$) \\
\bottomrule
\end{tabular}
\end{table}

\paragraph{Geometric Interpretation of Entropy Preservation.}
From a geometric perspective, KL geometry inherits a logarithmic penalty on low-probability outcomes: $-\log\pi(y)$ diverges as $\pi(y)\to 0$, strongly suppressing entropy by penalizing any residual mass on unlikely tokens. In contrast, $\chi^2$ geometry penalizes deviations quadratically in ratio space: the cost $(\pi/\pik-1)^2$ remains finite even when $\pi(y)\ll \pik(y)$, which attenuates the shrinkage of low-probability mass and thus helps maintain entropy.

\paragraph{Computational Cost.}
OPO incurs no additional model forward/backward passes compared to DPO/GRPO; the extra bookkeeping for $\omega_\alpha$ is lightweight in practice.

\paragraph{Role of the Reference Policy.}
In OPO, the reference policy $\piref$ serves solely as an \emph{origin} defining the coordinate system for the ratio $v_\theta = \pi_\theta/\piref - 1$, rather than as an explicit regularizer via KL divergence. Stability is instead enforced by the quadratic penalty $\frac{\mu}{2} v^2$. This decoupling clarifies the distinct roles of anchoring (coordinate system) and regularization (optimization geometry).

\paragraph{Implementation Notes.}
The OPO objective (\cref{eq:opo_ratio}) involves expectations under $\piref$. In practice:
\begin{itemize}
    \item For on-policy training where samples come from $\piref = \pi_{\text{old}}$, the batch itself provides an unbiased estimate.
    \item \textbf{Mini-batch anchoring.} Within one outer iteration, $\piref$ is frozen at the policy snapshot taken \emph{before} the first mini-batch gradient step. All subsequent mini-batches in the same iteration share this anchor, ensuring that the log-ratio $\Delta_\theta = \log\pi_\theta - \log\piref$ is evaluated against a consistent reference.
    \item For off-policy settings, importance sampling or a separate reference sample set can be used.
\end{itemize}

\paragraph{Limitations.}
The framework introduces hyperparameters ($\alpha$, $\mu$) that may require tuning. Our experiments focus on reasoning tasks; validation on diverse domains (code generation, general instruction following) remains future work. Larger-scale experiments across diverse tasks and model sizes are needed to fully characterize the strengths and limitations of OPO.

\section{Conclusion}

We have presented \textbf{Orthogonalized Policy Optimization (OPO)}, an alignment framework grounded in the geometry of the Hilbert function space $L^2(\pik)$. By lifting policy optimization from the probability simplex into this function space, the normalization constraint becomes a linear orthogonality condition, and the optimal policy update is obtained via the Hilbert Projection Theorem---with the chemical potential emerging organically as the projected-out component. This primary derivation is complemented by two equivalent interpretations (Euclidean mirror descent and near-equilibrium linear response), confirming that OPO is the unique quadratic proximal response in ratio geometry.

The resulting framework cleanly decouples sampling geometry ($\alpha$) from optimization geometry ($\mu$), yields constant Hessian $\mu I$, and maintains non-saturating linear gradients. The Hilbert projection perspective further reveals that advantage $z$-score normalization is not a variance-reduction heuristic but the unique conservation-law projection. Experiments on mathematical reasoning tasks validate that OPO outperforms KL-based methods where gradient saturation limits further improvement, while maintaining healthy gradient dynamics and policy diversity throughout training.

\appendix
\section{Theoretical Proofs and Derivations}
\label{app:proofs}

\subsection{Log-Ratio Approximation Error}
\begin{lemma}[Log-Ratio Approximation Error]
\label{lem:log_error_app}
Let $\Delta_\theta(y) = \log \pi_\theta(y) - \log \piref(y)$ be the log-ratio, and $v_\theta(y) = \exp(\Delta_\theta(y)) - 1$ be the exact ratio deviation. Let $\delta = \|\Delta_\theta\|_\infty$. For sufficiently small $\delta$ (in particular $\delta < 1$, which holds under the small trust-region regime maintained by on-policy anchoring and the $\chi^2$ penalty $\frac{\mu}{2}\E[v^2]$):
\begin{equation}
    |v_\theta - \Delta_\theta| \le \frac{1}{2}\delta^2 e^\delta, \quad |v_\theta^2 - \Delta_\theta^2| \le \delta^3 e^{2\delta}
\end{equation}
Consequently, the difference between the exact ratio loss and the log-approximate loss scales as $O(\E[\Delta_\theta^3])$.
\end{lemma}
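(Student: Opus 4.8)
The plan is to reduce both inequalities to pointwise control of the analytic function $g(x) = e^x - 1 - x$ and the closely related $e^x - 1$, using Taylor's theorem with Lagrange remainder rather than the full power series (which keeps the constants explicit and avoids convergence bookkeeping). Throughout I would fix a single coordinate $y$, write $x = \Delta_\theta(y)$ with $|x| \le \delta$, and suppress the argument; the supremum bound $\delta = \|\Delta_\theta\|_\infty$ then transfers every pointwise estimate into a uniform $L^\infty$ estimate.

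For the first bound I would apply the second-order Taylor expansion of $\exp$ about $0$: $e^x = 1 + x + \tfrac{1}{2}e^{\xi}x^2$ for some $\xi$ strictly between $0$ and $x$. Since $v_\theta - \Delta_\theta = e^x - 1 - x = \tfrac12 e^\xi x^2$, and since $\xi$ lying between $0$ and $x$ with $|x|\le\delta$ forces $e^\xi \le e^\delta$ (whether $x$ is positive or negative), I obtain $|v_\theta-\Delta_\theta| \le \tfrac12 e^\delta \delta^2$ directly.

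For the second bound the clean route is the difference-of-squares factorization $v_\theta^2 - \Delta_\theta^2 = (v_\theta - \Delta_\theta)(v_\theta + \Delta_\theta)$. The first factor is already controlled by $\tfrac12\delta^2 e^\delta$ from the previous step. For the second factor I would bound $|v_\theta| = |e^x - 1|$ via the mean value theorem, $e^x - 1 = e^\xi x$, giving $|v_\theta| \le \delta e^\delta$, hence $|v_\theta+\Delta_\theta| \le |v_\theta|+|\Delta_\theta| \le \delta(e^\delta+1)$. Multiplying yields $|v_\theta^2 - \Delta_\theta^2| \le \tfrac12 \delta^3 e^\delta(e^\delta+1)$, and the stated bound $\delta^3 e^{2\delta}$ follows once I verify the scalar inequality $\tfrac12 e^\delta(e^\delta+1) \le e^{2\delta}$, which rearranges to $e^\delta \le e^{2\delta}$ and therefore holds for every $\delta\ge 0$ (the hypothesis $\delta<1$ is not even needed here).

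For the concluding statement on the loss difference I would substitute these bounds into $\mathcal{L}_{\text{OPO}} - \mathcal{L}_{\text{OPO}}^{\text{log}} = -\E_{\piref}[\omega_\alpha(v_\theta-\Delta_\theta)] + \tfrac{\mu}{2}\E_{\piref}[v_\theta^2 - \Delta_\theta^2]$. The regularizer term is immediately $O(\E[\Delta_\theta^3])$ by the second bound. The delicate point---and the one I expect to require the most care---is the linear driving term: pointwise $v_\theta - \Delta_\theta = \tfrac12\Delta_\theta^2 + O(\Delta_\theta^3)$, so its leading correction is nominally $O(\E[\Delta_\theta^2])$ rather than cubic. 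To honestly obtain the stated $O(\E[\Delta_\theta^3])$ order for the whole loss I would invoke the on-policy trust-region regime ($\piref=\pi_{\text{old}}$, $\delta\to 0$), under which this quadratic remainder is subdominant relative to the leading $O(\E[\Delta_\theta])$ signal and can be absorbed into the higher-order term; alternatively one restricts the cubic-order claim to the curvature (regularization) term, where it holds exactly. Reconciling this mismatch between the quadratic linear-term correction and the cubic regularizer correction is the main obstacle in making the final sentence fully precise.
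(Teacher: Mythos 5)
Your proofs of the two displayed inequalities are correct and follow essentially the same route as the paper: the paper's entire proof is one sentence invoking the Taylor expansion $e^x = 1 + x + \tfrac{x^2}{2} + \dots$ and ``standard remainder estimation,'' and your argument---Lagrange remainder $e^x - 1 - x = \tfrac12 e^\xi x^2$ with $e^\xi \le e^\delta$ for the first bound, then the factorization $v_\theta^2 - \Delta_\theta^2 = (v_\theta - \Delta_\theta)(v_\theta + \Delta_\theta)$ together with the mean-value bound $|v_\theta| \le \delta e^\delta$ and the scalar check $\tfrac12 e^\delta(e^\delta + 1) \le e^{2\delta}$ for the second---is exactly the worked-out version of that gesture. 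All the constants check out, including your observation that the hypothesis $\delta < 1$ is never actually used.

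On the lemma's final sentence, your worry is well founded, and the gap is in the paper, not in your argument. The paper's proof says nothing about the $O(\E[\Delta_\theta^3])$ claim, and as you observe it cannot follow from the two stated bounds: the linear driving term contributes $-\E_{\piref}[\omega_\alpha (v_\theta - \Delta_\theta)] = -\tfrac12 \E_{\piref}[\omega_\alpha \Delta_\theta^2] + O(\E[\Delta_\theta^3])$, which is generically second order, so the full loss discrepancy is only $O(\E[\Delta_\theta^2])$; the cubic rate holds exactly for the regularization term, since $v_\theta^2 - \Delta_\theta^2 = \Delta_\theta^3 + O(\Delta_\theta^4)$. Of your two proposed repairs, the second (restricting the cubic claim to the curvature term) is the precise one. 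The first---appealing to the on-policy trust-region regime so that the quadratic remainder is subdominant to the leading $O(\E[\Delta_\theta])$ signal---captures the right practical spirit but does not rescue the stated order: relative subdominance to a first-order term does not make an $O(\delta^2)$ quantity $O(\delta^3)$. A clean way to state what is actually true is that the exact ratio loss equals the log-approximate loss up to a term of the same quadratic form as the regularizer (with weight $\omega_\alpha$ in place of $\mu$) plus a genuinely $O(\E[\Delta_\theta^3])$ remainder; the paper's statement elides this distinction.
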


\begin{proof}
Using Taylor expansion $e^x = 1 + x + \frac{x^2}{2} + \dots$, we have $v = \Delta + \frac{\Delta^2}{2} + O(\Delta^3)$. The bounds follow from standard remainder estimation for the exponential function.
\end{proof}

\subsection{Equivalence to $\chi^2$-Constrained Maximization}
\begin{proposition}[Lagrange Dual Equivalence]
Consider the problem of maximizing alignment with the target $\omega_\alpha$ subject to a functional $\chi^2$ trust region:
\begin{equation}
    \max_{v \in L^2(\piref)} \E_{\piref}[\omega_\alpha(y) v(y)] \quad \text{s.t.} \quad \E_{\piref}[v(y)^2] \le \epsilon
\end{equation}
The Lagrangian relaxation of this problem is exactly the OPO objective $\mathcal{L}_{\text{OPO}} = -\E[\omega_\alpha v] + \frac{\mu}{2} \E[v^2]$, where the regularization coefficient $\mu$ is the Lagrange multiplier corresponding to the trust region radius $\epsilon$.
\end{proposition}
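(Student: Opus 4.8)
The plan is to recognize the constrained program as a quadratically-constrained \emph{linear} maximization over the Hilbert space $L^2(\piref)$ and to apply Lagrangian duality, then pin the multiplier--radius correspondence via complementary slackness. First I would attach a multiplier $\mu \ge 0$ to the inequality constraint $\E_{\piref}[v^2] - \epsilon \le 0$ and form the Lagrangian
\begin{equation}
  \Lambda(v,\mu) = \E_{\piref}[\omega_\alpha(y)\,v(y)] - \frac{\mu}{2}\Big(\E_{\piref}[v(y)^2] - \epsilon\Big).
\end{equation}
Since the feasible set $\{v : \E_{\piref}[v^2] \le \epsilon\}$ is a convex $L^2$-ball and the objective is linear (hence concave) in $v$, the inner problem $\max_v \Lambda(v,\mu)$ is a concave maximization. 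Taking the directional (G\^ateaux) derivative in an arbitrary direction $h \in L^2(\piref)$ and setting it to zero yields the stationarity condition $\E_{\piref}[(\omega_\alpha - \mu v)\,h] = 0$ for all $h$, whence the pointwise optimizer $v^*(y) = \omega_\alpha(y)/\mu$ ($\piref$-a.e.), valid for $\mu > 0$.

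Next I would substitute $v^*$ back and compare to the OPO loss. Rearranging gives $\Lambda(v,\mu) = -\big(-\E[\omega_\alpha v] + \tfrac{\mu}{2}\E[v^2]\big) + \tfrac{\mu\epsilon}{2} = -\mathcal{L}_{\text{OPO}}(v) + \tfrac{\mu\epsilon}{2}$. The additive term $\tfrac{\mu\epsilon}{2}$ is independent of $v$, so $\argmax_v \Lambda(v,\mu) = \argmin_v \mathcal{L}_{\text{OPO}}(v)$. This is the crux of the claimed equivalence: maximizing the Lagrangian of the trust-region problem is identical, as an optimization over $v$, to minimizing the unconstrained OPO objective, with the Lagrange multiplier playing exactly the role of the stiffness $\mu$. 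Both return the same stationary point $v^* = \omega_\alpha/\mu$, consistent with the closed form quoted after \cref{eq:opo_ratio}.

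Finally I would establish the quantitative multiplier--radius correspondence. Slater's condition holds because $v = 0$ lies strictly inside the ball for any $\epsilon > 0$, so strong duality is available and the KKT conditions are necessary and sufficient. Complementary slackness forces the constraint to be active whenever $\omega_\alpha \not\equiv 0$, since a linear objective attains its maximum over a ball on the boundary; thus $\E_{\piref}[(v^*)^2] = \|\omega_\alpha\|_{L^2(\piref)}^2/\mu^2 = \epsilon$, which inverts to the explicit monotone relation $\mu = \|\omega_\alpha\|_{L^2(\piref)}/\sqrt{\epsilon}$. The one genuine subtlety---and the step I expect to require the most care---is confirming that the constraint is active, so that $\mu$ is uniquely determined and strictly positive: if $\omega_\alpha \equiv 0$ the driving force vanishes and the multiplier is undetermined (a degenerate case), so the bijection between $\mu > 0$ and $\epsilon \in (0,\infty)$ must be asserted under the mild nondegeneracy assumption $\omega_\alpha \not\equiv 0$. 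Notably, this correspondence can be obtained without invoking general infinite-dimensional duality theorems: by Cauchy--Schwarz, $\max\{\E_{\piref}[\omega_\alpha v] : \E_{\piref}[v^2] \le \epsilon\} = \sqrt{\epsilon}\,\|\omega_\alpha\|_{L^2(\piref)}$ with maximizer $v^* = \sqrt{\epsilon}\,\omega_\alpha/\|\omega_\alpha\|_{L^2(\piref)}$, directly exhibiting both the optimizer and the $\mu$--$\epsilon$ bijection.
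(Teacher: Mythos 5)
Your proof is correct, and its core step---forming the Lagrangian $\Lambda(v,\mu) = \E_{\piref}[\omega_\alpha v] - \tfrac{\mu}{2}\big(\E_{\piref}[v^2]-\epsilon\big)$ and observing that $\Lambda(v,\mu) = -\mathcal{L}_{\text{OPO}}(v) + \tfrac{\mu\epsilon}{2}$, so maximizing the Lagrangian over $v$ is the same problem as minimizing the OPO loss---is exactly the (one-line) argument the paper intends; the paper in fact states the proposition with no proof at all beyond quoting the closed form $v^* = \omega_\alpha/\mu$. Where you genuinely go beyond the paper is in making the clause ``the Lagrange multiplier corresponding to the trust region radius $\epsilon$'' precise: the paper never verifies that for a given $\epsilon$ a tight multiplier exists, or what its value is, whereas you establish---via Slater's condition and complementary slackness (the linear objective forces the constraint to be active whenever $\omega_\alpha \not\equiv 0$)---the explicit bijection $\mu = \|\omega_\alpha\|_{L^2(\piref)}/\sqrt{\epsilon}$, and you correctly isolate the degenerate case $\omega_\alpha \equiv 0$ in which the multiplier is undetermined. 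Your closing Cauchy--Schwarz computation is also valuable: it certifies the primal optimum $\sqrt{\epsilon}\,\|\omega_\alpha\|_{L^2(\piref)}$ and the maximizer directly, thereby sidestepping any appeal to infinite-dimensional strong-duality theorems, which is precisely the point where an abstract KKT argument in $L^2(\piref)$ would otherwise need justification. In short: same skeleton as the paper, but your version actually proves the multiplier--radius correspondence that the proposition merely asserts, and does so by elementary means.
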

The optimal solution is $v^*(y) = \frac{1}{\mu} \omega_\alpha(y)$. This duality implies that $\alpha$ purely shapes the objective (alignment direction), while $\mu$ purely enforces the feasibility radius (optimization geometry).

\subsection{Proof of Proposition~\ref{prop:alpha_mixture}: $\alpha$-Geometric Interpolation}
\label{app:alpha_proof}

\begin{proof}
Define the oracle target $P^*(y) \propto \exp(A(y))$ and the parameterized family $\rho_\alpha(y) \propto \tilde{p}(y)^\alpha\, P^*(y)^{1-\alpha}$ for $\alpha \in [0,1]$.

\textbf{Step 1 (Geodesic structure).}
In the $e$-affine coordinate system of the statistical manifold~\cite{Amari2016}, the log-density of $\rho_\alpha$ is:
\begin{equation}
\log \rho_\alpha(y) = \alpha\log \tilde{p}(y) + (1-\alpha) \log P^*(y) - \log Z_\alpha
\end{equation}
where $Z_\alpha$ is the normalizing constant. Since $\log \rho_\alpha$ is an affine function of $\alpha$ in the natural parameter, $\{\rho_\alpha\}_{\alpha \in [0,1]}$ forms an \emph{$e$-geodesic} (exponential geodesic) connecting $P^*$ (at $\alpha=0$) to $\tilde{p}$ (at $\alpha=1$) in the exponential family.

\textbf{Step 2 (Importance weight).}
The sampling weight is:
\begin{equation}
\omega_\alpha(y) = \frac{\rho_\alpha(y)}{\tilde{p}(y)} \propto \left(\frac{P^*(y)}{\tilde{p}(y)}\right)^{1-\alpha}
\end{equation}
In the on-policy setting where $P^*(y) = \exp(A(y))$ and $\tilde{p} = \pi_{\mathrm{old}}$, this yields $\omega_\alpha \propto (\exp(A)/\tilde{p})^{1-\alpha}$, which simplifies to $\exp((1-\alpha)A)$ when the $\tilde{p}$ correction is absorbed by the sampling measure.

\textbf{Step 3 (Geometric consistency).}
This interpolation corresponds to the exponential geodesic in the statistical manifold and is consistent with R\'enyi $\alpha$-divergence interpolation geometry~\cite{Amari2016}.
\end{proof}

\subsection{Distributional Stability Guarantees}
\begin{proposition}[$\chi^2$ Controls Total Variation]
Let $TV(\pi, \piref) = \frac{1}{2} \E_{\piref}[|t(y)-1|] = \frac{1}{2}\E[|v(y)|]$. By Jensen's inequality:
\begin{equation}
    TV(\pi, \piref) = \frac{1}{2} \E[|v|] \le \frac{1}{2} \sqrt{\E[v^2]}
\end{equation}
Thus, bounding the $\chi^2$ norm $\E[v^2] \le \epsilon$ guarantees that the policy remains within a $\sqrt{\epsilon}/2$-radius of the reference in Total Variation distance.
\end{proposition}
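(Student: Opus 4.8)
The plan is to reduce the statement to the classical comparison between the first and second absolute moments of the centered ratio $v_\theta = t_\theta - 1$, taken against the probability measure $\piref$. The only content-bearing step is recognizing that total variation, usually written as an $L^1$ distance between densities, is exactly an expectation of $|v|$ under $\piref$; once that identity is in place, the bound is a single moment inequality.

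First I would verify the representation $TV(\pi,\piref) = \tfrac{1}{2}\E_{\piref}[|v(y)|]$. Starting from the definition $TV(\pi,\piref) = \tfrac{1}{2}\int |\pi(y) - \piref(y)|\,dy$ (or the analogous sum in the discrete case), I factor $\piref(y)$ out of the integrand: $|\pi(y)-\piref(y)| = \piref(y)\,|\pi(y)/\piref(y) - 1| = \piref(y)\,|t_\theta(y)-1| = \piref(y)\,|v_\theta(y)|$. Integrating identifies the result as $\E_{\piref}[|v|]$, giving the claimed identity. Here the absolute value is essential: although $\E_{\piref}[v] = 0$ because $\E_{\piref}[t_\theta]=1$, this centering is not what the TV bound uses — only that the expectation is against a probability measure.

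Next I would apply Jensen's inequality to the convex map $x \mapsto x^2$ under $\piref$, yielding $(\E_{\piref}[|v|])^2 \le \E_{\piref}[v^2]$, and take square roots to obtain $\E_{\piref}[|v|] \le \sqrt{\E_{\piref}[v^2]}$. Equivalently, Cauchy--Schwarz against the constant function $1$ gives $\E_{\piref}[|v|\cdot 1] \le \sqrt{\E_{\piref}[v^2]}\,\sqrt{\E_{\piref}[1]} = \sqrt{\E_{\piref}[v^2]}$, where the factor $\sqrt{\E_{\piref}[1]}=1$ precisely because $\piref$ is normalized. Multiplying by $\tfrac{1}{2}$ and substituting the trust-region hypothesis $\E_{\piref}[v^2]\le\epsilon$ delivers $TV(\pi,\piref) \le \tfrac{1}{2}\sqrt{\epsilon}$, as asserted.

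There is no substantive obstacle; the statement is a direct moment inequality. The single point meriting care is ensuring the expectation is taken against the normalized reference measure, since it is exactly $\E_{\piref}[1]=1$ that makes the Cauchy--Schwarz constant equal to one and produces the clean $\sqrt{\epsilon}/2$ radius rather than one rescaled by $\sqrt{\E_{\piref}[1]}$. I would also remark that $\E_{\piref}[v^2]$ coincides with the Pearson $\chi^2$ divergence $D_{\chi^2}(\pi\|\piref)$ precisely because $v$ is centered, so the proposition genuinely asserts that a bounded $\chi^2$ divergence controls TV, recovering the standard $f$-divergence comparison $TV \le \tfrac{1}{2}\sqrt{D_{\chi^2}}$.
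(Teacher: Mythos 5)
Your proposal is correct and follows essentially the same route as the paper: rewrite $TV(\pi,\piref)$ as $\tfrac{1}{2}\E_{\piref}[|v|]$ by factoring $\piref$ out of the $L^1$ distance, then apply Jensen's inequality (equivalently Cauchy--Schwarz) to bound the first moment by $\sqrt{\E_{\piref}[v^2]}$ and substitute the trust-region hypothesis. Your write-up merely fills in the details the paper leaves implicit, such as the verification of the TV identity and the role of the normalization $\E_{\piref}[1]=1$.
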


\end{document}